\newtheorem{theorem}{Theorem}
\theoremstyle{definition}
\newtheorem{definition}{Definition}
\theoremstyle{definition}
\title{Learning Linear Non-Gaussian Graphical Models with Multidirected Edges}
\author{Yiheng Liu}
\author{Elina Robeva}
\author{Huanqing Wang}
\affil[]{\normalsize\textit{The University of British Columbia}}
\date{}
\begin{document}

\maketitle

\begin{abstract}
In this paper we propose a new method to learn the underlying acyclic mixed graph of a linear non-Gaussian structural equation model given observational data. We build on an algorithm proposed by Wang and Drton~\cite{BANG}, and we show that one can augment the hidden variable structure of the recovered model by learning {\em multidirected edges} rather than only directed and bidirected ones. Multidirected edges appear when more than two of the observed variables have a hidden common cause. We detect the presence of such hidden causes by looking at higher order cumulants and exploiting the multi-trek rule~\cite{multiTrek}. Our method recovers the  correct structure when the underlying graph is a bow-free acyclic mixed graph with potential multi-directed edges.
\end{abstract}

\unmarkedfntext{\noindent \hspace{-0.33cm} Keywords: Graphical models, Linear Structural Equation Models, Non-Gaussian variables, multi-treks, high-order cumulants

{{MSC2020}} Subject Classification: {62H22, 62R01, 62J99}}

\section{Introduction}
Building on the theory of causal discovery from observational data, we propose a method to learn linear non-Gaussian structural equation models with hidden variables. Importantly, we allow each hidden variable to be a parent of multiple of the observed variables, and we represent this graphically via {\em multi-directed} edges. Therefore, given observational data, we seek to find an acyclic mixed graph whose vertices correspond to the observed variables, and which has directed and multi-directed edges.

\begin{figure}[H]
     \centering
     \begin{subfigure}[b]{0.31\textwidth}
         \centering
         \includegraphics[width=1.1\textwidth]{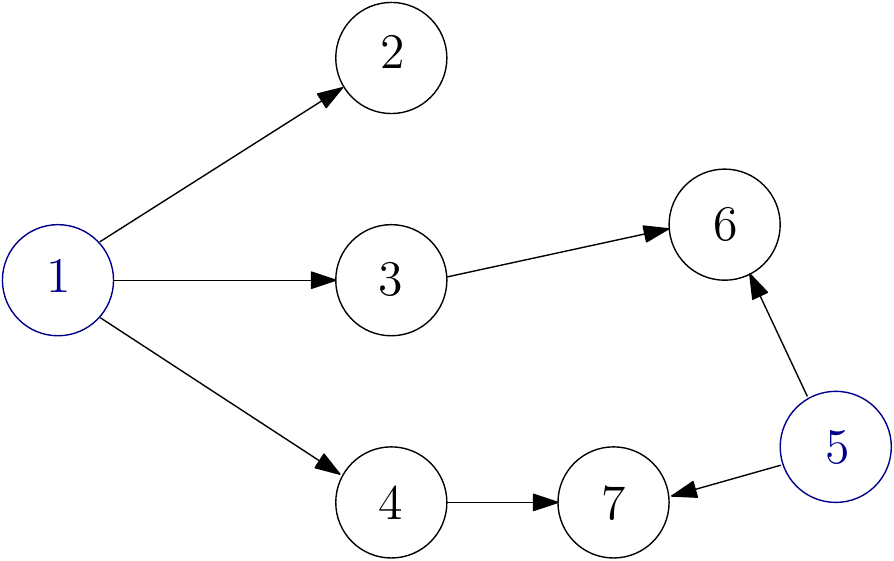}
         \caption{Example DAG.}
         \label{fig:1a}
     \end{subfigure}
     \hfill
     \begin{subfigure}[b]{0.31\textwidth}
         \centering
         \includegraphics[width=0.82\textwidth]{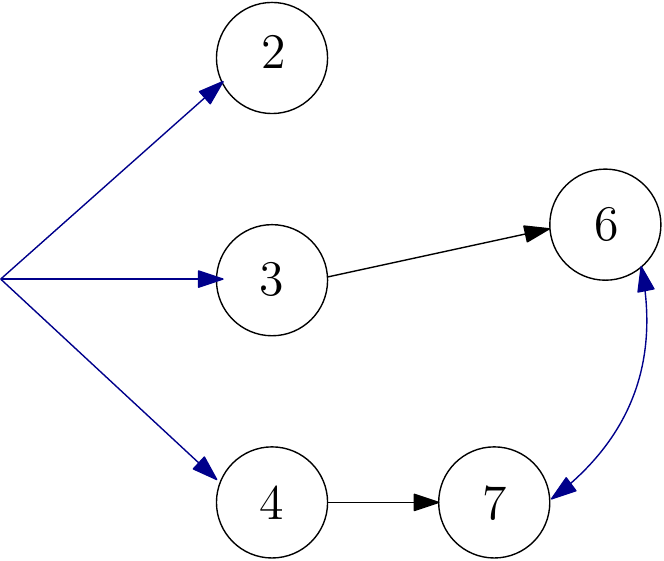}
         \caption{Mixed graph.}
         \label{fig:1b}
     \end{subfigure}
     \hfill
     \begin{subfigure}[b]{0.31\textwidth}
         \centering
         \includegraphics[width=0.65\textwidth]{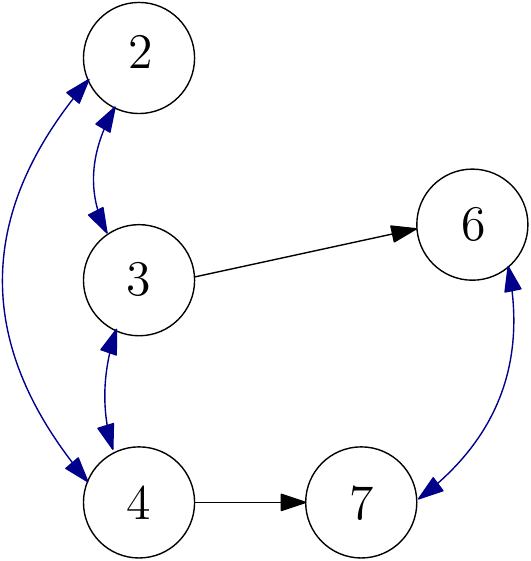}
         \caption{Mixed graph.}
         \label{fig:1c}
     \end{subfigure}
        \caption{The graph in Figure~\ref{fig:1a} is directed acyclic. After marginalization of vertices 1 and 5, the resulting mixed graph with directed and bidirected edges only is that in Figure~\ref{fig:1c}. If we allow multidirected edges, we can capture the hidden variable structure better via the graph in Figure~\ref{fig:1b}, which also has a 3-directed edge.}
        \label{fig:1}
\end{figure}
 Consider an acylic mixed graph $G = (V, \mathcal D, \mathcal H)$, where $V = \{1, \ldots, p\}$ is the set of vertices, $\mathcal D$ is the set of directed edges, and $\mathcal H$ is the set of multi-directed edges, i.e., $\mathcal H$ consists of tuples of vertices such that the vertices in each tuple have a hidden common cause. The graph in Figure~\ref{fig:1c} has only directed and bidirected edges, while the one in Figure~\ref{fig:1b} also has a 3-directed edge. Both of these mixed graphs represent the directed acyclic graph in Figure~\ref{fig:1a} with the variables 1 and 5 unobserved. Representing hidden variables via such edges is quite commonly used for causal discovery~\cite{handbook}. 
 
 We restrict our attention to the  class of {\em Linear Structural Equation Models (LSEMs)}. A mixed graph $G$ gives rise to a linear structural equation model, which is the set of all joint distributions of a random vector $X=(X_1,\ldots, X_p)$ such that the variable $X_i$ associated to vertex $i$ is a linear function of a noise term $\epsilon_i$ and the variables $X_j$, as $j$ varies over the set of parents of $i$, denoted pa$(i)$, (i.e., the set of all vertices $j$ such that $j\to i \in E$). Thus,
$$X_i = \sum_{j\in\text{pa}(i)} b_{ij}X_j + \varepsilon_i, \,\, i\in V.$$

When the variables $X_1,\ldots, X_p$ are Gaussian, we are only able to recover the mixed graph up to Markov equivalence from observational data~\cite{handbook,  Lauritzen}.
When the variables are non-Gaussian, however, it is possible to recover the full graph from observational data. This line of work originated with the paper~\cite{Shimizu2006} by Shimizu et al. in which a linear non-Gaussian structural equation model corresponding to a direced acyclic graph (DAG), i.e., a graph without confounders, can be identified from
observational data using independent component analysis (ICA). Instead of ICA, the subsequent DirectLiNGAM~\cite{Shimizu2011} and Pairwise LiNGAM~\cite{PairwiseLiNGAM} methods use an
iterative procedure to estimate a causal ordering; Wang and Drton~\cite{WangDrton}
give a modified method that is also consistent in high-dimensional settings
in which the number of variables $p$ exceeds the sample size $n$.

Hoyer et al.~\cite{Hoyer} consider the setting where the data is generated by
a linear non-Gaussian acyclic model (LiNGAM), but some variables are unobserved. Their method, like ours, recovers  an acyclic mixed graph with  multidirected edges. However, it uses overcomplete ICA and requires the number of latent
variables in the model to be known in advance. Furthermore, current  implementations of
overcomplete ICA algorithms often suffer from local optima and can't guarantee convergence to a global one. To avoid using overcomplete ICA while still identifying unobserved confounders, Tashiro et al.~\cite{Parcellingam} propose a procedure, called ParcelLiNGAM, which  tests subsets  of observed variables. Wang and Drtong~\cite{BANG},  however,  show that this procedure works whenever the graph is ancestral, and propose a  new procedure,  called BANG, which uses patterns in higher-order moment data, and can identify {\em bow-free acyclic mixed  graphs}, a set of  mixed graphs much larger than the set of ancestral graphs. The BANG algorithm, however, recovers a mixed graph which only contains directed and bidirected edges.

Our method builds on the BANG procedure. We can identify a bow-free acyclic mixed graph, and, in addition, join bidirected edges together into a multi-directed edge whenever more than two of the observed variables have a hidden common cause.

The rest of the paper is organized as follows. In Section~\ref{sec:2} we give the necessary background on mixed graphs, linear structural equation models, and multi-treks. In Section~\ref{sec:3}, we present our algorithm (MBANG) and we prove that it recovers the correct graph as long as the empirical moments are close enough the the population moments. In Section 4 we  present numerical results, including simulations of different graphs and error distributions, and an application of our algorithm to a real dataset. We conclude with a short discussion in Section 5.

\section{Background}\label{sec:2}
In this section we introduce the key concepts that we use throughout the paper, including mixed graphs with multi-directed edges, linear structural equation models, multi-treks and cumulants.


\subsection{Mixed graphs with multi-directed edges}

The notion of a mixed graph is widely used in graphical modelling, where bidirected edges depict unobserved confounding. In this paper a mixed graph is also allowed to contain multi-directed edges to depict slightly more complicated unobserved confounding.

\begin{definition}
(a). We call $G = (V, \mathcal D, \mathcal H)$ a {\em mixed graph}, where $V = \{1,\ldots, p\}$ is the set of vertices, $\mathcal D\subseteq V\times V$ is the set of directed edges, and $\mathcal H$ is the set of multi-directed edges (all $k$-directed edges for $k\geq 2$). (see part (b)).

(b). For $k\geq 2$, a {\em $k$-directed} (or {\em multi-directed}) edge between distinct nodes $i_1,\ldots, i_k\in V$, denoted by $(i_1,\ldots, i_k)$, is the union of $k$ directed edges with the same hidden source and sinks $i_1,\ldots, i_k$ respectively. Multi-directed edges are unordered.

\begin{figure}[H]
\begin{subfigure}{0.5\textwidth}
\begin{center}
\vspace{0.35cm}

    \includegraphics[width=0.6\textwidth]{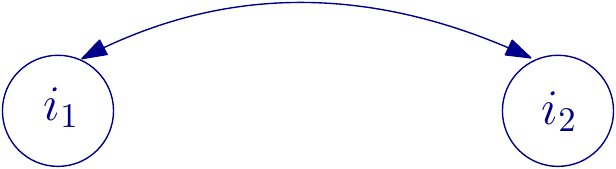}
    \caption{A 2-directed (or bidirected) edge between $i_1,i_2$.}
    \label{fig:2-edge}
\end{center}
\end{subfigure}
\begin{subfigure}{0.5\textwidth}
\begin{center}
    \includegraphics[width=0.6\textwidth]{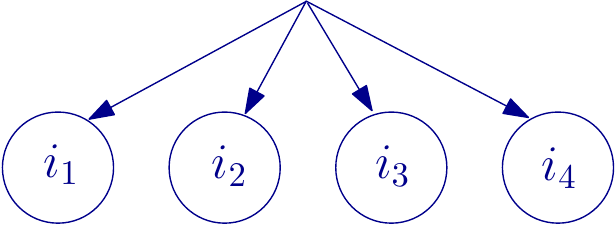}
    \caption{A 4-directed edge between $i_1,i_2,i_3,i_4$.}
    \label{fig:4-edge}
\end{center}
\end{subfigure}
\caption{Examples of multi-directed edges.}
\end{figure}
\end{definition}

Our method is able to recover {\em bow-free} {\em acyclic} mixed graphs. A mixed graph is bow-free if it does not contain a {\em bow}, and a bow consists of two vertices $i,j\in V$ such that there is both a directed and a multidirected edge between $i$ and $j$. In other words, $i\to j\in\mathcal D$ and there exists $h\in\mathcal H$ such that $i,j\in h$. A mixed graph is acyclic if it does not contain any {\em directed cycles}, where a directed cycle is a sequence of directed edges of the form $i_1\to i_2, i_2\to i_3,\ldots, i_\ell \to i_1$.
\begin{figure}[H]
\begin{subfigure}{0.5\textwidth}
\begin{center}
    \includegraphics[width=0.4\textwidth]{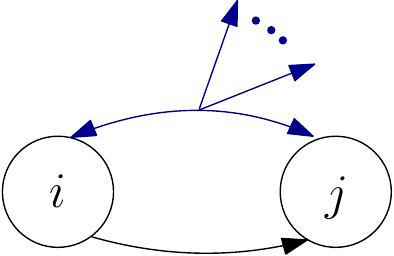}
    \caption{A bow between $i$ and $j$.}
    \label{fig:bow}
\end{center}
\end{subfigure}
\begin{subfigure}{0.5\textwidth}
\begin{center}
    \includegraphics[width=0.4\textwidth]{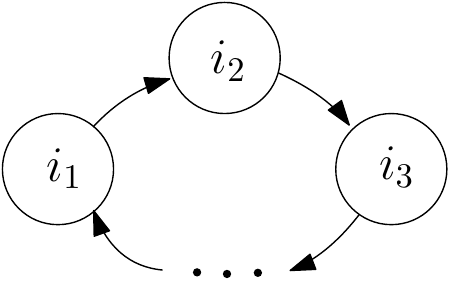}
    \caption{A cycle.}
    \label{fig:cycle}
\end{center}
\end{subfigure}
\caption{Examples of a bow and a cycle.}
\end{figure}

Acyclic mixed graphs with potential multidirected edges are all one  can hope to  recover from observational data.  Suppose for  a moment that  we  have data coming from a directed acyclic graph (DAG) where a subset  of the  variables is unobserved, e.g., consider the DAG in Figure~\ref{fig:originalModel},  where we only observe  variables  1, 4, and 5.

Hoyer et al.~\cite{Hoyer} show that any directed acyclic graphical model in which some of the variables are unobserved is observationally and causally equivalent to a unique {\em canonical model}, where a canonical model is a non-Gaussian Linear Structural Equation Model corresponding to an acyclic mixed graph $G=(V, \mathcal D, \mathcal H)$ such that none of the latent variables have any parents, and each latent variable has at least two children.  This means, that the  distribution of the observed variables in the original model is identical to that in the canonical model, and causal relationships of observed variables  in both models are identical. Therefore, we can focus our attention on the set of canonical models,  which can be represented precisely by acyclic mixed graphs with multi-directed edges. The graph in Figure~\ref{fig:canonical} is the  canonical model corresponding to the one  in Figure~\ref{fig:originalModel}.
\begin{figure}[H]
\begin{subfigure}{0.5\textwidth}
\begin{center}
    \includegraphics[scale = 0.6]{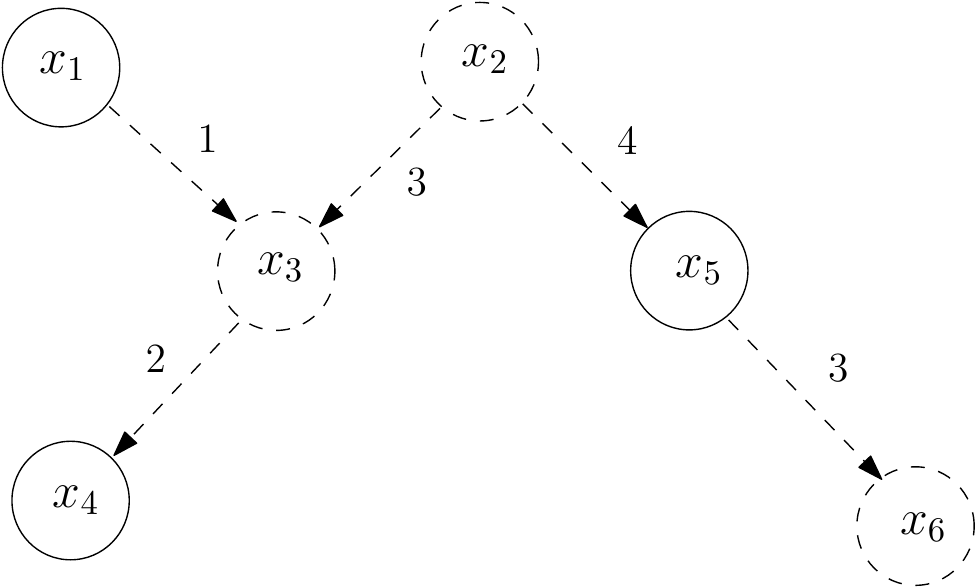}
    \caption{Original model}\label{fig:originalModel}
    \label{fig: origin}
\end{center}
\end{subfigure}
\begin{subfigure}{0.5\textwidth}
\begin{center}
    \includegraphics[scale = 0.6]{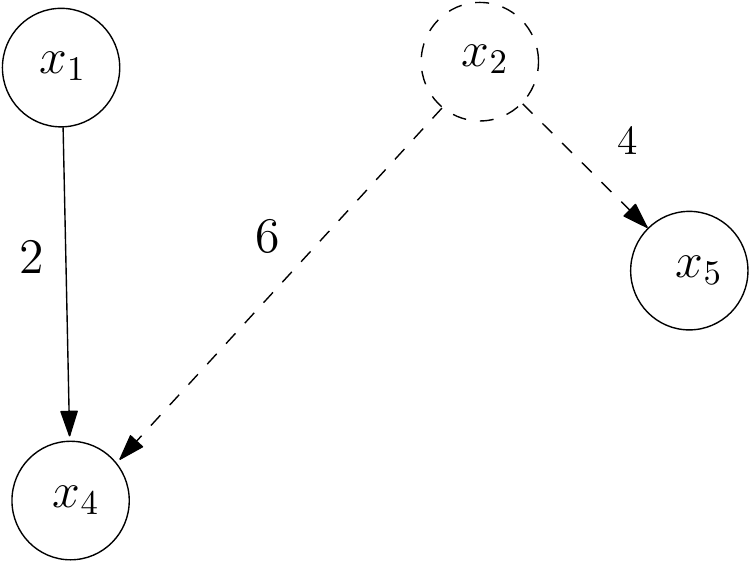}
    \caption{Canonical model}
    \label{fig:canonical}
\end{center}
\end{subfigure}
\caption{An example of two observationally and causally equivalent models from~\cite{Hoyer}.}
\label{fig:canonExample}
\end{figure}

\subsection{Linear Structural Equation Models}
Let $G=(V, \mathcal D, \mathcal H)$ be an acyclic mixed graph as defined in the previous section. It induces a statistical model, called a {\em linear structural equation model}, or {\em LSEM}, for the joint distribution of a collection of random variables $(X_i, i \in V )$, indexed by the graph’s vertices. The model hypothesizes that each variable
is a linear function of its parent variables and a noise term $\varepsilon_i$:
\begin{align}\label{LSEM}X_i = b_{0i} + \sum_{j\in\text{pa}(i)} b_{ji}X_j + \varepsilon_i
, i \in V,
\end{align}
where $\text{pa}(i) = \{j\in V: j\to i\in\mathcal D\}$ is the set of parents of vertex $i$. The variables $\varepsilon_i, i\in V$ are assumed to have mean 0, and the coefficients $b_{0i}$ and $b_{ji}$ are unknown real parameters. Since we can center the variables $X_i$, we assume that the coefficients $b_{0i}$ are all equal to 0. In addition, the multi-directed edge structure of $G$ defines dependencies between the noise terms $\varepsilon_i$, that is, if $i$ and $j$ are not connected by a multi-directed edge, then $\varepsilon_i$ and $\varepsilon_j$ are independent variables. In particular, if $G$ is a DAG (directed acyclic graph), i.e., it does not have any multi-directed edges, then all noise terms $\varepsilon_i$ are mutually independent.
Typically termed a system of structural equations, the system~\eqref{LSEM} specifies cause-effect relations whose straightforward
interpretability explains the wide-spread use of the models~\cite{Pearl, SGS00}.

We can rewrite the system~\eqref{LSEM} as
$$X = BX + \varepsilon,$$
where $X = (X_1,\ldots, X_p)^T$, $B = (b_{ij})$ is the coefficient matrix satisfying $b_{ij} = 0$ whenever $i\to j\not\in \mathcal D$, and $\varepsilon$ is the noise vector. Note that since $G$ is assumed to be acyclic, we can permute the coefficient matrix $B$ so that it is a lower triangular matrix. Therefore, the matrix $I-B$ is invertible, and the system~\eqref{LSEM} can further be rewritten as
$$X = (I-B)^{-1}\varepsilon.$$

\subsection{Cumulants and the multi-trek rule}
We recall the notion of a cumulant tensor for a random vector~\cite{ComonJutten}.
\begin{definition}
Let ${Z}=(Z_1,\cdots,Z_p)$ be a random vector  of length $p$. The $k$-th cumulant tensor of ${Z}$ is defined to be a $p\times \cdots\times p$ ($k$ times) table, $\mathcal C^{(k)}$, whose  entry  at  position  $(i_1,\cdots, i_k)$ is 
\begin{align*}
    \mathcal C^{(k)}_{i_1,\ldots, i_k}=\sum_{(A_1,\cdots,A_L)}(-1)^{L-1}(L-1)!\, \mathbb{E}\left[\prod_{j\in A_i}Z_j\right]\cdots\mathbb{E}\left[\prod_{j\in A_L}Z_j\right],
\end{align*}
where the sum is taken over all partitions $(A_1,\cdots, A_L)$ of the set $\{i_1,\cdots, i_k\}$.
\end{definition}

Note that if each of the variables $Z_i$ has zero mean, then we can restrict the summing over partitions for which each $A_i$ has size at least two. For example: 
\begin{align*}
    \mathcal C^{(4)}_{i_1,i_2,i_3,i_4}&=\mathbb{E}(Z_{i_1}Z_{i_2}Z_{i_3}Z_{i_4})-\mathbb{E}(Z_{i_1}Z_{i_2})\mathbb{E}(Z_{i_3}Z_{i_4})-\mathbb{E}(Z_{i_1}Z_{i_3})\mathbb{E}(Z_{i_2}Z_{i_4})+\\
    &\quad \mathbb{E}(Z_{i_1}Z_{i_4})\mathbb{E}(Z_{i_2}Z_{i_3})
\end{align*}

We now recall the notion of a multi-trek from~\cite{multiTrek}.
\begin{definition}
A $k$-trek in a mixed graph $G=(V, \mathcal D, \mathcal H)$ between $k$ nodes $i_1,i_2,\cdots, i_k$ is an ordered collection of $k$ directed paths $(P_1,\cdots, P_k)$ where $P_j$ has sink $i_j$ and either $P_1,\cdots, P_k$ have the same source of vertex, or there exists a multi-directed edge $h\in\mathcal H$ such that the sources of $P_1,\ldots, P_k$ all lie in $h$.
\end{definition}

\begin{figure}[H]
\begin{subfigure}{0.5\textwidth}
    \centering
    \includegraphics[width=0.5\textwidth]{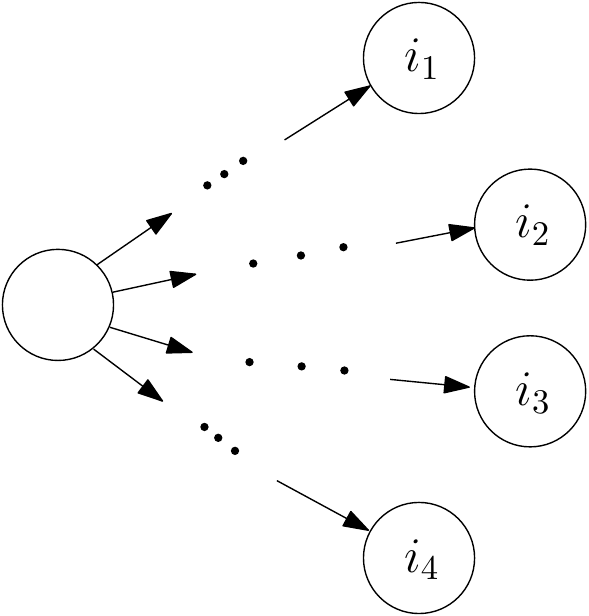}
    \caption{A 4-trek between $i_1,i_2,i_3,i_4$.}
    \label{fig:my_label}
    \end{subfigure}
    \begin{subfigure}{0.5\textwidth}
    \centering
    \includegraphics[width=0.45\textwidth]{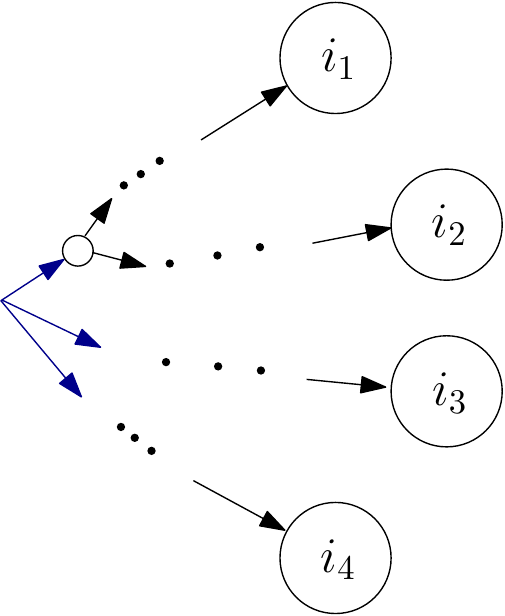}
    \caption{A 4-trek between $i_1,i_2,i_3,i_4$.}
    \label{fig:my_label}
    \end{subfigure}
    \caption{Examples of multi-treks.}
\end{figure}



The following consequence of the {\em multi-trek rule}~\cite{multiTrek} connects cumulants of LSEMs to multi-treks.
\begin{theorem}[\cite{multiTrek}]\label{thm:multiTrek} Let $G = (V, \mathcal D, \mathcal H)$ be an acyclic mixed graph, and let $i_1,\ldots, i_k\in V$. Then, 
$$\mathcal C^{(k)}_{i_1,\ldots, i_k} = 0$$
for the $k$-th cumulant tensor $\mathcal C^{(k)}$ of any random  vector  whose  distribution  lies in the LSEM corresponding to $G$ if and only if there is no $k$-trek between $i_1,\ldots, i_k$ in $G$.
\end{theorem}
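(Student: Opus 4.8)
The plan is to expand $X$ in the noise vector, read off $\mathcal C^{(k)}_{i_1,\dots,i_k}$ as a polynomial in the model parameters, and match its (potentially) nonzero monomials with the $k$-treks of $G$.

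First I would write $X=(I-B)^{-1}\varepsilon$ and use acyclicity to expand $(I-B)^{-1}=\sum_{m\ge 0}B^m$, so that its $(i,a)$ entry is $t_{i\leftarrow a}:=\sum_{P\colon a\rightsquigarrow i}\prod_{e\in P}b_e$, summed over directed paths $P$ from $a$ to $i$; as a polynomial in the coefficients $b_e$ this is not identically zero precisely when $G$ contains a directed path $a\rightsquigarrow i$. Since $X_i=\sum_a t_{i\leftarrow a}\,\varepsilon_a$ and cumulants are multilinear,
\[
\mathcal C^{(k)}_{i_1,\dots,i_k}(X)=\sum_{a_1,\dots,a_k\in V} t_{i_1\leftarrow a_1}\cdots t_{i_k\leftarrow a_k}\;\operatorname{cum}(\varepsilon_{a_1},\dots,\varepsilon_{a_k}).
\]
Next I would determine which noise cumulants are nonzero. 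Writing $\varepsilon_a=\eta_a+\sum_{h\in\mathcal H,\,h\ni a}\lambda_{h,a}L_h$ with the private noises $\{\eta_a\}$ and the hidden common causes $\{L_h\}$ mutually independent, expanding the $\varepsilon_{a_j}$, and using multilinearity together with the fact that a joint cumulant vanishes as soon as its arguments split into two mutually independent groups, $\operatorname{cum}(\varepsilon_{a_1},\dots,\varepsilon_{a_k})$ becomes a sum of products of $\lambda$'s times single cumulants $\kappa_k(Y)$ of building blocks $Y\in\{\eta_a\}\cup\{L_h\}$, and such a term survives only when all the $a_j$ point to the same block: either all $a_j$ equal a single vertex $c$, or all $a_j$ lie in a single multi-directed edge $h$.

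Combining these, a term of $\mathcal C^{(k)}_{i_1,\dots,i_k}$ that is not identically zero carries exactly the data of a common source (a vertex $c$, or a multi-directed edge $h$), source vertices $a_1,\dots,a_k$ with all $a_j=c$ or all $a_j\in h$, and directed paths $a_j\rightsquigarrow i_j$ for each $j$; this is precisely a $k$-trek between $i_1,\dots,i_k$, the two cases of the definition of a multi-trek matching the two cases of the common source. Hence if $G$ has no $k$-trek between $i_1,\dots,i_k$, every term above lacks either a common source or one of the required paths and so vanishes identically, giving $\mathcal C^{(k)}_{i_1,\dots,i_k}=0$ for every distribution in the LSEM of $G$. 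For the converse I would fix one $k$-trek $T$ with source $s$ and specialize the parameters: set every $b_e$ and every $\lambda_{h,a}$ equal to $1$; if $s$ is a vertex $c$, give $\eta_c$ a nonzero $k$-th cumulant (which exists for every $k\ge 2$) and make all other $\eta$'s and all $L_h$'s Gaussian, while if $s$ is a multi-directed edge $h$, give $L_h$ a nonzero $k$-th cumulant and make all other building blocks Gaussian. Then only the $s$-configurations remain, and $\mathcal C^{(k)}_{i_1,\dots,i_k}$ collapses to $\kappa_k(\text{the chosen block})\cdot\prod_{j=1}^k\big(\sum_a \#\{\text{directed paths }a\rightsquigarrow i_j\}\big)$, with $a$ ranging over $\{c\}$, resp.\ over $h$; each factor is a sum of nonnegative integers that is at least $1$ because $T$ supplies a directed path into each $i_j$ from an allowed source, so the cumulant is nonzero.

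I expect the converse to be the real obstacle: a priori the many monomials of the cumulant polynomial could conspire to cancel, and the specialization above is precisely the device that prevents this — it kills all higher cumulants except at one chosen source and sets all structural coefficients to $1$, turning the relevant factor into a manifestly positive path count. Two bookkeeping points deserve care in a full write-up: checking that this specialization still lies in the LSEM of $G$ (it does, since the edge and mixing coefficients are unconstrained and choosing Gaussian noise marginals does not disturb the multi-directed-edge dependence pattern), and verifying that the two flavours of $k$-trek — a shared vertex source versus sources contained in a multi-directed edge — are paired correctly with the two flavours of surviving noise cumulant.
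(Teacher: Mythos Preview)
The paper does not prove this theorem; it is quoted from \cite{multiTrek} and used as a black box, so there is no in-paper argument to compare against. Your outline is essentially the standard proof of the multi-trek rule and is correct: expand $X=(I-B)^{-1}\varepsilon$ as a path sum, push the cumulant through by multilinearity, and identify the surviving monomials with $k$-treks; for the converse, specialize the parameters so that a single source carries all the higher-order non-Gaussianity and the cumulant reduces to a positive path count times a nonzero $\kappa_k$.

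Two small points worth tightening in a full write-up. First, your decomposition $\varepsilon_a=\eta_a+\sum_{h\ni a}\lambda_{h,a}L_h$ with mutually independent $\{\eta_a\}\cup\{L_h\}$ is a modelling assumption, not a consequence of the bare independence constraints stated in Section~\ref{sec:2}; it is, however, exactly the canonical-model parametrisation the paper adopts (each multi-directed edge is a single hidden parent), so you should say this explicitly rather than introduce it mid-proof. Second, in the converse for $k=2$ the ``make everything else Gaussian'' trick does not kill the other variance terms, but your positivity argument still goes through since every contribution is a nonnegative path count times a positive variance and the chosen trek guarantees at least one strictly positive term; it is worth separating this case in one line.
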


\section{Main Result}\label{sec:3}
In this section we present our algorithm and we show that it recovers the correct graph given enough samples. In the chart below we illustrate how the algorithm works when applied to observational data coming from the graph in Figure~\ref{fig:1a}.

\begin{figure}[]
\centering
    \centering
    \includegraphics[scale = 0.6]{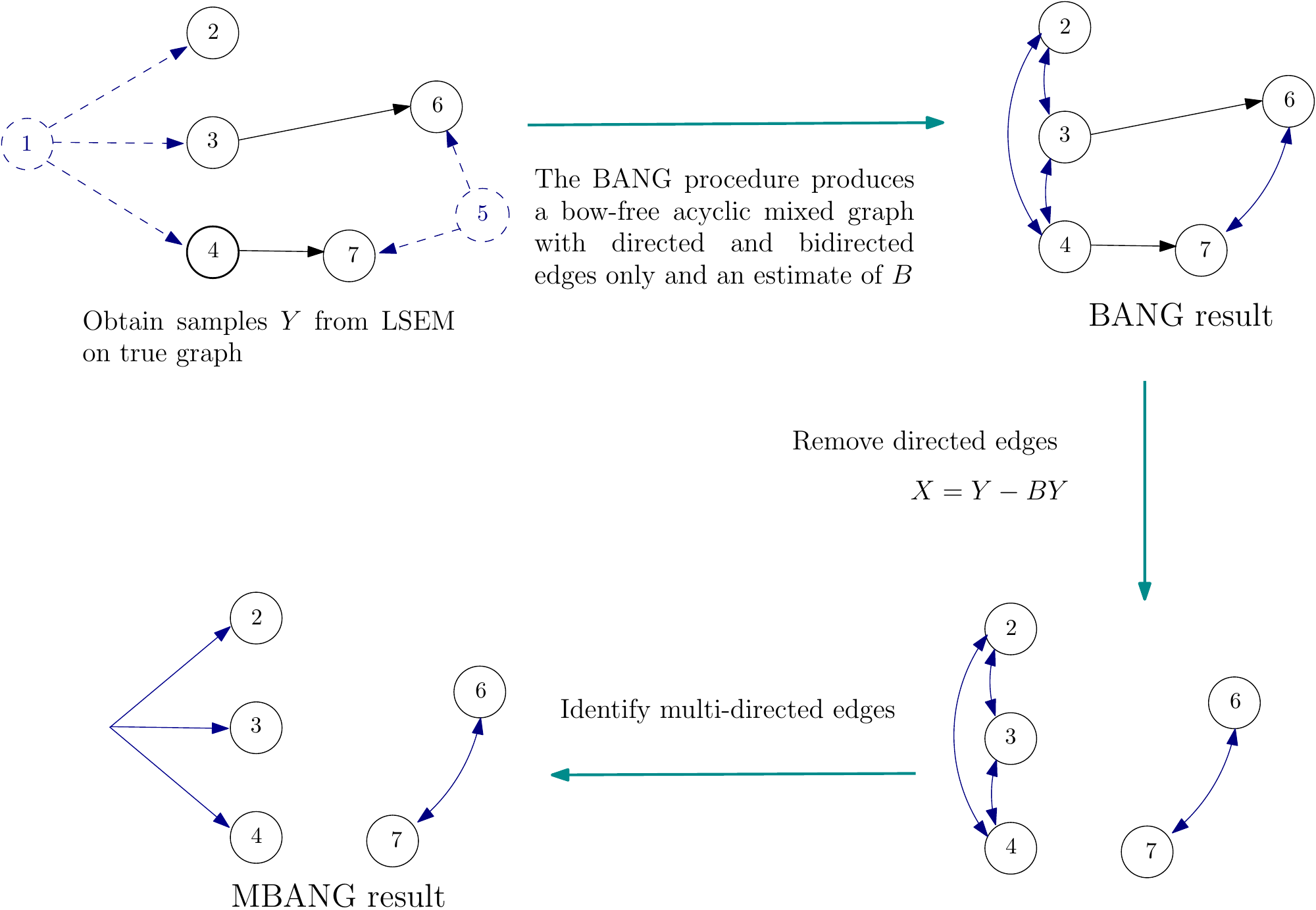}
    \caption{}
    \label{fig: flow}
\end{figure}


Given  a $p\times n$ data matrix $Y$ whose columns are i.i.d. sampels from a LSEM on an unknown bow-free acyclic mixed graph $G=(V, \mathcal D, \mathcal H)$ with an unknown direct  effects matrix $B$, we aim to recover the graph $G$  and  the matrix $B$.
The first step is to apply the BANG procedure~\cite{BANG} and obtain the coefficient matrix $B$ together with a bow-free acyclic mixed graph $G_1 = (V, \mathcal D, \mathcal B)$ which contains directed and bidirected edges only. The bidirected edges $\mathcal B$ are obtained from $\mathcal H$  by replacing each multi-directed edge $h=(i_1,\ldots, i_k)\in\mathcal H$ by $\binom k2$ bidirected eges, one for each pair $i_s, i_t\in\{i_1,\ldots,  i_k\}$. We call such a set $\mathcal B$ the {\em bidirected subdivision} of $\mathcal H$. We then, "remove" the directed edges by removing the direct effects given by $B$. This is done by replacing the original data matrix $Y$ with $X = Y-BY$. This new matrix $X$ can be thought of as observations from a LSEM corresponding to the acyclic mixed graph  $G'=(V, \emptyset, \mathcal H)$ which is the same as $G$ with the directed edges removed. However, we only know the bidirected subdivision $\mathcal B$ of $\mathcal H$. Finally, using the higher order cumulants of $X$ and a clique-finding algorithm on the  graph $G_1' = (V, \emptyset, \mathcal B)$, we identify all multi-directed edges in $\mathcal H$. The algorithm is summarized below. 


\begin{algorithm}[H]
\caption{MBANG procedure}\label{Alg:1}
\begin{algorithmic}[1]
\STATE Input: $Y\in \mathbb{R}^{p\times n}$ arising from an unknown LSEM  on $G=(V, \mathcal  D, \mathcal H)$ with unknown direct effects matrix $B$.
\STATE Apply the BANG procedure to estimate the coefficient matrix $B$ and the  mixed graph $G_1=(V, \mathcal D,  \mathcal B)$.
\STATE Let $X=Y-BY$ be the new observation matrix, corresponding to the graph $G'=(V, \emptyset, \mathcal H)$ which has no directed edges.
\STATE Initiate $R=Q=\varnothing$, $P=\{1,2,...,p\}$.
\STATE Apply {Algorithm~\ref{alg:2}} to $X, G_1' = (V, \emptyset, \mathcal B), R,P$ and $Q$ to recover the set $\mathcal  H$  of multidirected edges.
\STATE Output: the  graph $G = (V, \mathcal D, \mathcal H)$  and the matrix $B$.
\end{algorithmic}
\end{algorithm}

Algorithm~\ref{alg:2} finds those multi-directed edges that contain the most vertices  and are consistent with  the cumulant structure of the data matrix $X$. It is based  on the Bron-Kerbosh algorithm~\cite{BronKerbosch} for finding all cliques in an undirected graph, applied to the bidirected edges in $G_1'  = (V, \emptyset, \mathcal B)$.

\subsection{The BANG procedure}
In this section we briefly describe the {BANG Procedure}~\cite{BANG}. It is an algorithm which takes as input  a $p\times n$ data matrix $Y$, and returns a \textit{bow-free acyclic mixed graph} $G$ which contains directed and bidirected edges only,  consistent with  $Y$ as well as a direct effects matrix $B$. \par
  Suppose the observed data is drawn from a LSEM whose corresponding graph is the one in Figure~\ref{fig:bangExample}. Figure \ref{fig:bangFlowChart} shows how the {BANG algorithm} works.


It first identifies sibling and ancestor relations, and then it distinguishes parent and non-parent ancestors. Recall that two vertices are {\em siblings} if  there is a multi-directed edge between them, a vertex $i$ is a {\em parent} of a vertex $j$ if there is a directed  edge $i\to j$,  and a vertex $i$ is an {\em ancestor} of a vertex $j$ if there is a direted  path $i\to i_0\to\cdots\to i_k\to j$.
 \begin{figure}[H]
     \centering
      \begin{subfigure}{0.25\textwidth}
      \vspace{5cm}
      \hspace{-0.2cm}
     \hspace{0.5cm}\includegraphics[width=0.6\textwidth]{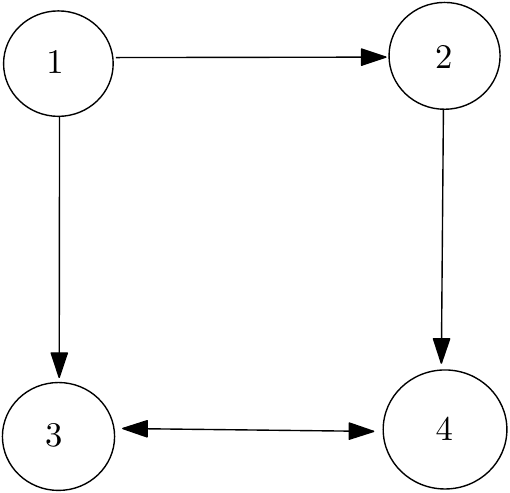}
     \caption{Graph example}
     \label{fig:bangExample}
     \end{subfigure}
     \begin{subfigure}{0.65\textwidth}
     \hspace{1cm}\includegraphics[width=0.95\textwidth]{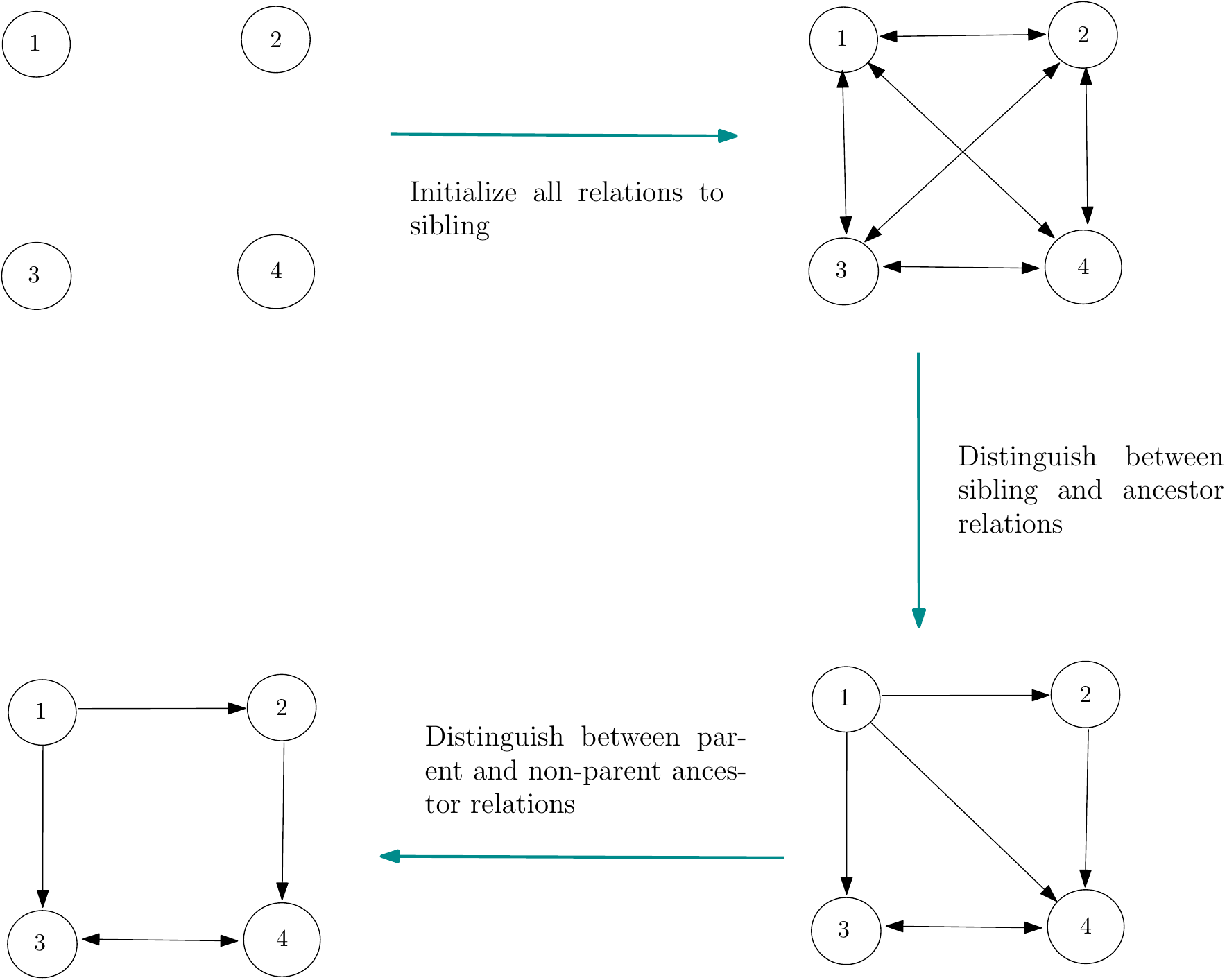}
     \caption{BANG flowchart}
     \label{fig:bangFlowChart}
     \end{subfigure}
    \caption{BANG algorithm example}
 \end{figure}
 
 However, the {BANG algorithm} returns {bow-free acyclic mixed graphs} in which the latent  variable structure is recorded using  bidirected edges only, i.e.,  it  recovers the bidirected subdivision of the true set of multi-directed edges $\mathcal H$. This means it cannot determine whether more than two vertices have a common cause. We resolve this problem using cumulant information.
 
 
 \begin{figure}[]
     \begin{subfigure}{\linewidth}
     \centering
     \includegraphics[scale = 0.5]{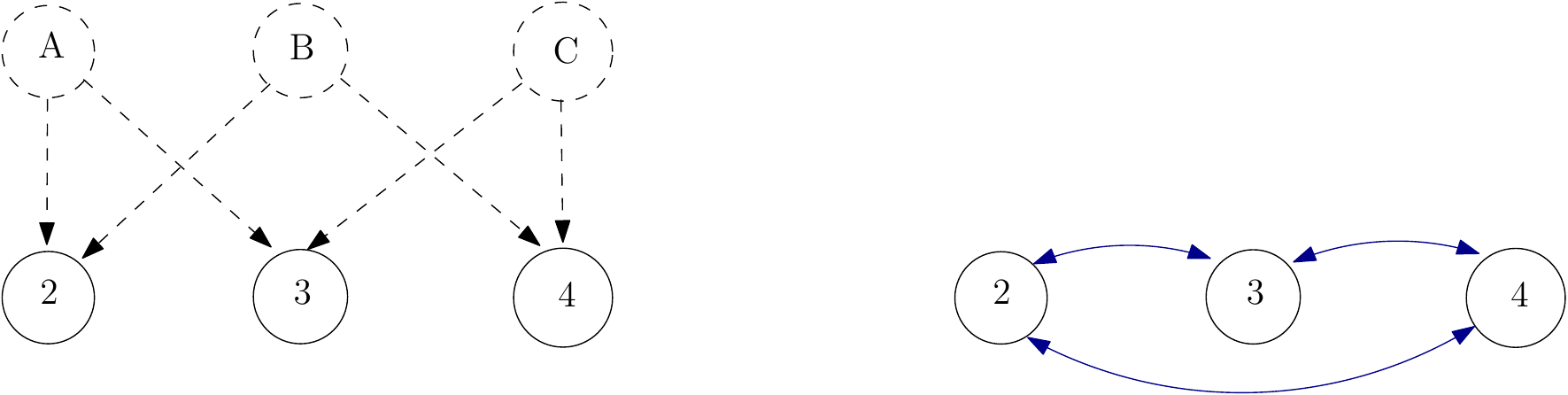}
     \caption{Case 1}
     \end{subfigure}
     \begin{subfigure}{\linewidth}
     \centering
     \includegraphics[scale = 0.5]{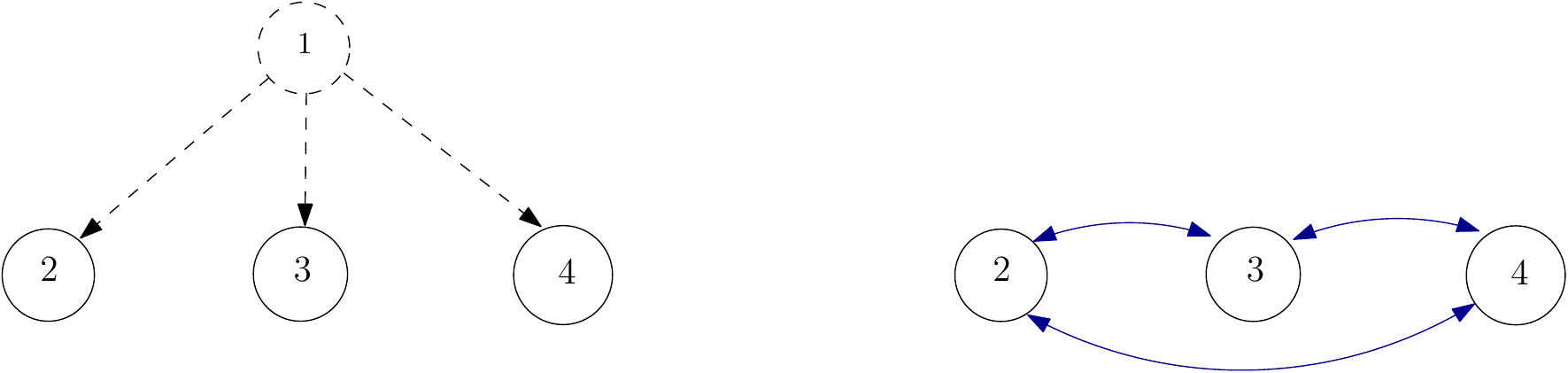}
     \caption{Case 2}
     \end{subfigure}
     \caption{The  same bidirected  edge structure depicts two different hidden  variable  models.}
     \label{fig: bangLimit}
 \end{figure}

\subsection{Finding the  multidirected edge  structure}
Consider the two models in Figure \ref{fig: bangLimit}. By Theorem \ref{thm:multiTrek}, we  know that $\mathcal{C}^{(3)}_{2,3,4}=0$ in Case 1 whereas $\mathcal{C}^{(3)}_{2,3,4}\neq 0$ in Case 2. In other words, the exact model can be recovered by testing whether $\mathcal{C}^{(3)}_{2,3,4}=0$. \par


More generally, as depicted in Figure~\ref{fig: flow}, we first apply the BANG algorithm to the data marix $Y\in\mathbb  R^{p\times n}$ arising from a LSEM on $G=(V, \mathcal D, \mathcal H)$, to obtain an  estimate of the direct effects matrix $B$ and the bow-free acyclic mixed graph $G_1=(V, \mathcal D,  \mathcal  B)$ with  directed and bidirected edges only, where there is a bidirected edge  between $i$  and $j$ in $G_1$ if and only if there is a multidirected edge in $G$ containing both $i$ and $j$, i.e., $\mathcal B$ is  the bidirected subdivision of $\mathcal H$. Then,  we form a new data matrix $X=Y-BY$ and we consider the graph $G' = (V, \emptyset, \mathcal H)$ which contains only multi-directed edges. In other words, the  data matrix $X$ can be thought of as a matrix of samples coming from a LSEM on the graph $G'$. However, we  only know the  graph $G_1'  =  (V, \emptyset, \mathcal B)$.
The bidirected edges $\mathcal B$ recovered by the  BANG algorithm can now be "merged" together to obtain  the true  multi-directed edge structure $\mathcal H$.
Since $G_1'$ contains only bidirected edges,
we look for all cliques (of  bidirected edges) $\{i_1,\ldots, i_k\}$ in $G'$ such that  $\mathcal C^{(k)}_{i_1,\ldots, i_k}\neq 0$,  where $\mathcal C^{(k)}$ is the $k$-th cumulant of $X$. We do this by  adapting the  Bron-Kerbosch algorithm~\cite{BronKerbosch}, which is used for finding all cliques  in an undirected graph. 

\begin{algorithm}[H]
	\caption{Determine Multidirected Edges}
	 \label{alg:2}
\begin{algorithmic}[1]
\STATE Input: $X\in \mathbb{R}^{p\times n}, G_1'=(V, \emptyset, \mathcal B), R,P$ and $Q$. Denote the elements of $R$ to be $i_1,...,i_k$.
\IF{$P$ and $Q$ are both empty}
\STATE Report $R$ as a multidirected edge.
\ENDIF
\FORALL{vertex $v\in P$}
\IF{$N(v) \neq \varnothing$, where $N(v)$ is the set of vertices adjacent to $v$ in $\mathcal B$} 
\IF{$\mathcal{C}^{(k+1)}_{i_1,\cdots,i_k,v} \neq 0$ or $\exists$ $j \in \{i_1,\cdots,i_k\}$ s.t. $\mathcal{C}^{(k+2)}_{i_1,\cdots,i_k,v,j}\neq 0$}\label{lst: line: condition}
\STATE call Algorithm~\ref{alg:2}$(X, R\cup \{v\}, P\cap N(v),Q\cap N(v))$
\ENDIF
\ENDIF
\STATE $P=P\backslash \{v\}$
\STATE $Q = Q\cup \{v\}$
\ENDFOR
 \end{algorithmic}
 \label{alg:multiedge}
 \end{algorithm}
 
Algorithm~\ref{alg:2} is a direct modification of the Bron-Kerbosch algorithm~\cite{BronKerbosch}. It is  a  recursive algorithm, that maintains three disjoint sets of vertices $R, P, Q \subset V$, and aims to output all cliques in the  graph which contain  all vertices in $R$, do not contain any  of the  vertices in $Q$, and  could contain some  of the vertices in $P$. The only addition to the Bron-Kerbosch algorithm that Algorithm~\ref{alg:2} does is the test of whether or not specific cumulants are nonzero in Line 7.



In practice, since Theorem \ref{thm:multiTrek} holds for generic distributions, sometimes random variables consistent with a model that has a multi-directed edge  between $i_1,\ldots,  i_k$ may still have a zero cumulant. For example, if $i_1$, $i_2$, and $i_3$ are all caused by a hidden parent variable  whose distribution is symmetric around $0$, then $\mathcal{C}^{(3)}_{i_1,i_2,i_3}=0$. To solve this problem, we relax the criterion $\mathcal{C}^{(k)}_{i_1,\cdots,i_k} \neq 0$. In our implementation, we also check if there  exists $j\in\{i_1,\ldots, i_k\}$ such that $\mathcal{C}^{(k+1)}_{i_1,\cdots,i_k,j}\neq 0$. If  there is a multidirected edge $h$ such that  $i_1,\ldots, i_k\in h$ but due to non-genericity of the model $\mathcal C^{k}_{i_1,\ldots,  i_k} =0$, we find that often times $\mathcal{C}^{(k+1)}_{i_1,\cdots,i_k,j}\neq 0$ which allows us to reach the correct conclusion.



\subsection{Theoretical results}
In this section we show that Algorithm~\ref{Alg:1} recovers the correct bow-free acyclic mixed graph given enough samples. We begin with a  population moment result.

\begin{theorem}\label{thm: alg}
Suppose $Y$ is generated from a linear structural equation model corresponding to a bow-free acyclic mixed graph $G=(V, \mathcal D, \mathcal H)$. Then for a generic choice of the coefficient matrix $B$ and generic error moments, when given the population moments of $Y$, Algorithm~\ref{Alg:1} produces the correct graph $G$. 
\end{theorem}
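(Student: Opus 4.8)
The plan is to decompose the correctness of Algorithm~\ref{Alg:1} into three parts matching its three conceptual stages, and then to verify that each stage succeeds under genericity. First, I would invoke the correctness guarantee of the BANG procedure from~\cite{BANG}: for generic $B$ and generic error moments, BANG applied to the population moments of $Y$ returns the true direct effects matrix $B$ and the bow-free acyclic mixed graph $G_1 = (V, \mathcal D, \mathcal B)$, where $\mathcal B$ is exactly the bidirected subdivision of $\mathcal H$. Here I would need to check that the bow-free acyclicity of $G$ (with multi-directed edges) implies bow-free acyclicity of $G_1$ (with only directed and bidirected edges), so that BANG's hypotheses are met — this is immediate from the definition, since replacing a $k$-directed edge by its $\binom{k}{2}$ bidirected edges cannot create a bow (a bow would require $i \to j$ together with $i, j$ in a common multi-directed edge in $G$, which is already forbidden). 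So after Step~2 we have recovered $\mathcal D$, $B$, and $\mathcal B$ correctly; it remains only to merge $\mathcal B$ into $\mathcal H$.

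Second, I would argue that $X = Y - BY = (I - (I-B)^{-1} \cdot \text{something})\ldots$ — more carefully, since $Y = (I-B)^{-1}\varepsilon$ componentwise in the LSEM sense, one has $Y - BY = (I-B)Y = \varepsilon$, so the new data matrix $X$ has exactly the distribution of the noise vector $\varepsilon$, whose dependence structure is governed by the graph $G' = (V, \emptyset, \mathcal H)$: the entries $\varepsilon_i, \varepsilon_j$ are dependent only through shared multi-directed edges. Thus $X$ lies in the LSEM corresponding to $G'$, and by Theorem~\ref{thm:multiTrek}, $\mathcal C^{(k)}_{i_1,\ldots,i_k} \neq 0$ can only hold (for generic error moments) when there is a $k$-trek between $i_1,\ldots,i_k$ in $G'$; since $G'$ has no directed edges, a $k$-trek between these vertices exists if and only if they all lie in a common multi-directed edge $h \in \mathcal H$. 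Conversely — and this is where genericity of the error moments is essential — if $i_1,\ldots,i_k$ all lie in a common $h \in \mathcal H$, then for generic choice of the hidden-source distribution and the coefficients from that source, $\mathcal C^{(k)}_{i_1,\ldots,i_k} \neq 0$; the cumulant is a nonzero polynomial in the model parameters (it equals, up to scaling, the $k$-th cumulant of the shared hidden variable times a product of coefficients) and hence is nonzero off a measure-zero set. I would need to state carefully that the "generic error moments" in the theorem hypothesis are interpreted to include genericity of these latent-source cumulants, so that the non-generic cancellation discussed in the remark (symmetric hidden parent) does not occur; under this reading the relaxed condition in Line~\ref{lst: line: condition} using $\mathcal C^{(k+1)}$ or $\mathcal C^{(k+2)}$ is not needed for the population-moment statement, and it suffices that $\mathcal C^{(k)}_{i_1,\ldots,i_k} \neq 0$ exactly characterizes cliques of $\mathcal B$ that are contained in a single $h \in \mathcal H$.

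Third, with that characterization in hand, I would verify that Algorithm~\ref{alg:2} — the cumulant-gated Bron--Kerbosch recursion on $G_1' = (V, \emptyset, \mathcal B)$ — reports precisely the maximal such cliques, i.e., precisely the elements of $\mathcal H$. The key structural fact is that $\mathcal B$ is a disjoint union of cliques, one clique $K_h$ on the vertex set of each $h \in \mathcal H$ — wait, more precisely the cliques may overlap if a vertex belongs to two multi-directed edges, so I would instead argue: every maximal clique of the graph $(V, \mathcal B)$ is exactly the vertex set of some $h \in \mathcal H$ (this uses bow-freeness together with the structure of canonical models — a subtle point, since in principle two multi-directed edges could share more than one vertex and their union-minus could create spurious cliques; I would need a lemma, perhaps already implicit in~\cite{BANG}, that the vertex sets of distinct multi-directed edges in a canonical model are not nested and pairwise intersect in limited ways, or else simply that the cumulant test distinguishes a genuine $h$ from an accidental clique). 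Granting that, standard correctness of Bron--Kerbosch says the unmodified algorithm enumerates all maximal cliques of $(V,\mathcal B)$; the cumulant gate in Line~\ref{lst: line: condition} only ever prunes a branch when the current $R$ cannot be extended inside any $h \in \mathcal H$, and never prunes a branch leading to a genuine $h$ (since all sub-tuples of $h$ have nonzero cumulant, by the same generic-nonvanishing argument applied to subsets). Hence the reported sets are exactly the maximal cliques that sit inside some $h$, which are exactly the $h \in \mathcal H$ themselves, completing the recovery of $\mathcal H$ and hence of $G = (V, \mathcal D, \mathcal H)$.

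The main obstacle I anticipate is the third step: showing that the clique structure of the bidirected subdivision $\mathcal B$, together with the cumulant test, pins down $\mathcal H$ uniquely — i.e., that there is no ambiguity of the kind where a set of vertices forms a $\mathcal B$-clique and has a nonzero joint cumulant yet does not equal any single $h \in \mathcal H$. Ruling this out requires a genuine graph-theoretic argument about canonical models (possibly that in a canonical bow-free model the multi-directed edges can be assumed to have pairwise-incomparable vertex sets, or that overlapping ones would force a treatable configuration), combined with the observation that a spurious clique $S$ spanning two distinct hidden sources would have $\mathcal C^{(|S|)}_S = 0$ because there is no single common source and no multi-directed edge containing all of $S$ — which is exactly what Theorem~\ref{thm:multiTrek} delivers, since a $|S|$-trek in $G'$ needs all sources in one $h$. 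So in fact Theorem~\ref{thm:multiTrek} resolves the ambiguity for us, and the remaining care is just bookkeeping: confirming that the recursion's pruning is sound and complete with respect to "extendable inside some $h$," which I would do by induction on $|R|$.
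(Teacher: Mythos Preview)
Your proposal is correct and follows essentially the same three-stage decomposition as the paper's proof: invoke BANG's correctness from~\cite{BANG} to obtain $B$ and the bidirected subdivision $\mathcal B$, pass to $X = Y - BY$ on the directed-edge-free graph $G'$, and then use Theorem~\ref{thm:multiTrek} to equate nonzero cumulants with containment in a single $h\in\mathcal H$ so that the cumulant-gated Bron--Kerbosch recovers $\mathcal H$. You are in fact more careful than the paper about the spurious-clique issue and about why $G_1$ remains bow-free, but the overall strategy and key inputs are identical.
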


\begin{proof}
Algorithm~\ref{Alg:1} first apples the BANG procedure to produce a \textit{bow-free acyclic mixed graph} $G_1 = (V, \mathcal D, \mathcal  B)$ containing bi-directed edges and directed edges only as well as the coefficient matrix $B$. In~\cite{BANG} the authors show that the BANG algorithm recovers the correct bow-free acyclic mixed graph (with directed and bidirected edges only) and the correct coefficient matrix $B$. Note that  this means that the graph $G_1$ will have a bidirected edge $b\in \mathcal B$ between two nodes $i$ and $j$ which are part of a multi-directed edge $h\in\mathcal H$ in $G$. Consider the data matrix $X=Y-BY$ which corresponds to a graph $G' = (V, \emptyset, \mathcal H)$ obtained by removing all directed edges in $G$, and the  graph $G_1'=(V, \emptyset, \mathcal B)$ obtained from $G_1$ by  removing all directed edges. Our task is to merge together some of the bidirected edges in $G_1'$ in order to obtain the graph $G'$.

There is  a multi-directed edge between $i_1,\ldots, i_k$ in $G'$ (or, equivalently, in $G$), if and only if any  two of $i_1,\ldots, i_k$ are joined by a bidirected edge in $G_1'$,  i.e., they will  form a clique, AND 
\begin{align}\label{eq:nonZeroCumulant}
    \mathcal{C}^{(k)}_{i_1,\cdots, i_k}\neq 0,
\end{align}
where $\mathcal{C}^{(k)}$ is the $k$-th cumulant of $X$ (assuming the distribution of $X$ is generic).
Therefore, finding all multi-directed edges is equivalent to finding all maximal cliques in $G_1'$ for which~\eqref{eq:nonZeroCumulant} holds. This is precisely what Algorithm~\ref{alg:2} does -- it applies the Bron-Kerbosch algorithm for  finding all cliques in  an undirected graph (which can equally well be applied to the graph  $G_1'$ since it only  has  bidirected edges) and for each such  clique it checks whether~\eqref{eq:nonZeroCumulant} holds.

Therefore, Algorithm~\ref{Alg:1} produces the correct graph $G$.
\end{proof}

\begin{theorem}
Suppose $Y_1,Y_2,\cdots,Y_n$ are generated by a linear structural equation model which corresponds to a bow-free acyclic mixed graph $G=(V, \mathcal  D, \mathcal H)$. Then, for generic choices of the effects coefficient matrix $B$, and generic error moments, there exist $\delta_1,\delta_2, \delta_3>0$ such that if the sample moments are within a $\delta_1$ ball of the population moments of $Y$, then Algorithm~\ref{Alg:1} will produce the correct graph $G$ when comparing the absolute value of the sample statistics to $\delta_2$ as a proxy for the independence tests in the  BANG  procedure and when comparing the  absolute value of the cumulants to $\delta_3$ as a proxy in the tests for the vanishing of cumulants in Algorithm~\ref{alg:2}. 
\end{theorem}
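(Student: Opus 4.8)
The plan is to lift the population-moment argument of Theorem~\ref{thm: alg} to the finite-sample regime by combining the finite-sample guarantee of the BANG procedure with a continuity (perturbation) bound for the cumulant tests of Algorithm~\ref{alg:2}, together with a strictly positive gap separating the population cumulants that vanish from those that do not. Throughout, \emph{generic} should mean ``outside a fixed \emph{finite} union of proper algebraic subsets of the parameter space $(B,\text{error moments})$'': namely the genericity locus required by BANG, together with one nonvanishing condition $\mathcal C^{(|S|)}_S(\varepsilon)\neq 0$ for each of the finitely many subsets $S\subseteq V$ that lie inside a single multi-directed edge of $\mathcal H$. Each such condition is a nontrivial polynomial constraint precisely because, by Theorem~\ref{thm:multiTrek} applied to $G'=(V,\varnothing,\mathcal H)$, the set $S\subseteq h$ admits an $|S|$-trek, so the attached cumulant is not identically zero on the model; a finite intersection of full-measure open-dense sets is again full-measure and open-dense.

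First I would invoke the finite-sample correctness theorem for BANG from~\cite{BANG}: for generic $B$ and error moments there exist $\delta_1^{\mathrm B},\delta_2>0$ such that, whenever the empirical moments of $Y$ lie within a $\delta_1^{\mathrm B}$-ball of the population moments, BANG run with threshold $\delta_2$ in place of the exact independence tests returns the correct mixed graph $G_1=(V,\mathcal D,\mathcal B)$. On the event that BANG recovers the correct directed support $\mathcal D$, its estimate $\widehat B$ solves a nondegenerate system of moment equations supported on the known edge set, hence is a continuous function of the sample moments there, so $\|\widehat B-B\|\le\eta(\delta_1)$ for some $\eta$ with $\eta(t)\to 0$ as $t\to 0$. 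This fixes $\delta_2$ and settles the directed and bidirected parts of the output; it remains to analyze line~5 of Algorithm~\ref{Alg:1}.

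Next, write $X=(I-B)Y=\varepsilon$, so the population cumulants $\mathcal C^{(m)}$ of $X$ depend only on the error distribution, and let $\widehat X=(I-\widehat B)Y$ be the matrix formed in line~3. Since $\widehat X=\bigl(I+(B-\widehat B)(I-B)^{-1}\bigr)\varepsilon$, each sample cumulant $\widehat{\mathcal C}^{(m)}$ of $\widehat X$, for the finitely many orders $m$ appearing in Line~\ref{lst: line: condition} (all at most $p+1$), is a fixed polynomial in the entries of $\widehat B-B$ and in the sample moments of $Y$ of order $\le m$; hence $\bigl\|\widehat{\mathcal C}^{(m)}-\mathcal C^{(m)}\bigr\|\le\rho(\delta_1)$ for some non-decreasing $\rho$ with $\rho(t)\to 0$, provided the sample moments of $Y$ are within $\delta_1\le\delta_1^{\mathrm B}$ of the population ones. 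On the population side, the proof of Theorem~\ref{thm: alg} already shows that every cumulant queried in Line~\ref{lst: line: condition} equals exactly $0$ when the index set is not contained in a single $h\in\mathcal H$, and otherwise is one of the generically nonzero cumulants isolated above; let $\gamma>0$ be the minimum of the absolute values of those (finitely many) should-be-nonzero cumulants.

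Finally I would pick $\delta_1^\star>0$ with $\rho(\delta_1^\star)<\gamma/2$, set $\delta_1=\min\{\delta_1^{\mathrm B},\delta_1^\star\}$, and choose any $\delta_3\in\bigl(\rho(\delta_1),\,\gamma-\rho(\delta_1)\bigr)$. Then every population-zero cumulant has empirical absolute value $<\delta_3$, while every population-nonzero relevant cumulant has empirical absolute value $>\delta_3$, so each thresholded test in Line~\ref{lst: line: condition} returns the same yes/no answer as its idealized counterpart. By induction on the recursion of Algorithm~\ref{alg:2}, the thresholded run therefore makes exactly the same sequence of recursive calls and reports as the idealized run analyzed in the proof of Theorem~\ref{thm: alg}; hence it outputs exactly $\mathcal H$, and Algorithm~\ref{Alg:1} returns $G=(V,\mathcal D,\mathcal H)$. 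I expect the main obstacle to be the bookkeeping around $\widehat B$: because BANG thresholds, $\widehat B$ is not globally continuous in the data, so the perturbation bound on the cumulants of $\widehat X$ must be routed through the BANG finite-sample guarantee (correct support of $\mathcal D$, and then $\|\widehat B-B\|\le\eta(\delta_1)$ on that event) rather than invoked directly; a secondary point requiring care is checking that the genericity locus is genuinely a \emph{finite} list of nontrivial polynomial conditions, which is exactly where Theorem~\ref{thm:multiTrek} does the work.
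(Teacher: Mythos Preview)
Your proposal is correct and follows essentially the same route as the paper's own proof: invoke the BANG finite-sample guarantee to obtain the correct $G_1$ and a close estimate $\widehat B$, use continuity of the cumulants of $X=(I-B)Y$ in the sample moments and in $\widehat B$, and separate the zero from the nonzero population cumulants by a strictly positive gap to choose $\delta_3$. You are in fact more careful than the paper on two points---making the genericity locus explicit as a finite list of polynomial conditions via Theorem~\ref{thm:multiTrek}, and routing the continuity of $\widehat B$ through the event that BANG recovers the correct support rather than asserting global continuity---but the architecture of the argument is the same.
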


\begin{proof}
First,  consider the data matrix $X=Y-BY$. Note that by definition the cumulants of $X$ have the form
\begin{align*}
    \mathcal C^{(k)}_{i_1,\ldots,i_k}  :=  cum\left(X_{i_1},\cdots,X_{i_k}\right)=\sum_{(A_1,\cdots,A_L)}(-1)^{L-1}(L-1)!\,\mathbb{E}\left[\prod_{j\in A_1}X_j\right]\cdots\mathbb{E}\left[\prod_{j\in A_L}X_j\right],
\end{align*}
which is a rational function of the moments of $X$,  which are rational functions of the moments of $Y$ and of the matrix $B$. Thus, it is also a continuous function of the population moments of $Y$. For the population cumulants $\mathcal{PC}^{(k)}$ of $X$  let
\begin{align*}
    \delta_3 = \frac12 \min_{\mathcal{PC}^{(k)}_{i_1,\cdots,i_k}\neq 0}\,\left|\mathcal{PC}^{(k)}_{i_1,\cdots,i_k}\right|,\quad 1\leq k\leq p.
\end{align*}
Thus, there exists $\delta'>0$ such that whenever the empirical moments of $Y$ are within a $\delta'$ ball of its population moments and the estimated $B$ is within $\delta'$ of the true one, all of the estimated cumulant entries of $X$ are within an $\delta_3$ ball of the entries of its population cumulants $\mathcal C^{(k)}$.

Next, we know from~\cite{BANG} that there exist $\delta,\delta_2 > 0$ such that if  the sample moments are within a $\delta$ ball of the population moments BANG will output the  correct graph $G_1=(V, \mathcal D, \mathcal B)$ (where $\mathcal B$ is the bidirected subdivision  of $
\mathcal H$) if it uses $\delta_2$  as a proxy for its independence tests.
Furthermore, if the sample moments are within a $\delta$ ball of the  population moments, the estimated directed effects matrix $B$ will be  within a $\delta'$ ball from the  true  direct  effects matrix. This is because the  estimated matrix $B$ is a rational function of the sample moments~\cite{BANG}.


Thus, choosing $\delta_1 =  \min(\delta, \delta')$, we see  that if the sample moments of $Y$ are within $\delta_1$ of its population moments, and if we use $\delta_2$ as a proxy for its independence tests in BANG and $\delta_3$ as as a proxy in the tests for vanishing cumulants, Algorithm~\ref{Alg:1} will yield the correct graph $G=(V, \mathcal D, \mathcal H)$.
\end{proof}

\section{Numerical Results}\label{sec:4}
In  this section we examine how  well Algorithm~\ref{Alg:1} performs numerically.
\subsection{Simulations}
We carried out a number of simulations using four different types of error distributions: the uniform distribution on $[-10,10]$; the student's $t$-distribution with 10 degrees of freedom; the Gamma distribution with shape $=2$ and rate $=4$; and the chi-squared distribution with 2 degrees of freedom.
We shift these distributions to have zero mean. The $t$-distribution is used to test the performance of the algorithm  in cases when the distribution resembles the Gaussian distribution.  \par

We generate random bow-free acyclic mixed graphs as follows. First, we uniformly select a prescribed number of directed edges from the set $\{(i,j)|i<j\}$, and we choose the  direct effect coefficients for each edge uniformly from $(-1,-0.6)\cup(0.6, 1)$. Afterwards, some of the vertices of this directed acyclic graph which have no parents are regarded as unobserved variables, and bow structures caused by marginalizing these vertices are removed from the graph. The resulting graph is a bow-free acyclic mixed graph with  potential multidirected edges.\par
To evaluate the performance of our method, we calculate the proportion of time that the algorithm recovers the true graph.
We generate graphs with  7 vertices, and after some of them are made hidden, the final graphs usually contain 5 or 6 observed variables. We test three settings with different levels of sparsity. Sparse graphs have 5 directed  edges before marginalization, medium graphs have 8, and dense graphs have 12. We do not restrict the number of multidirected edges, however, almost all dense graphs have multidirected edges, and most of the medium graphs have at least one multidirected edge.\par

In the first step of our algorithm, we perform the BANG procedure, and we set all the nominal levels of the hypothesis test to $\alpha = 0.01$, as suggested by the authors of the BANG algorithm\cite{BANG}. The tolerance value used in our cumulant tests we use is $0.05$.\par

For each of the three settings: sparse, medium, and dense, we tested 100 graphs by taking different numbers of samples: 10000, 25000 and 50000. 
In Figure~\ref{fig:test_edge_percent}, we show the percent of correctly identified  bidirected and multidirected edges for each setting. The $x$ axis represents the sample size and the $y$ axis represents the percentage.\par 

In Figure \ref{fig:test_edge_total}, we show the proportion of graphs that were recovered precisely by the BANG and MBANG algorithms. For the BANG algorithm, we recognize each multidirected edge as the corresponding set of bidirected edges.

In practice, we normalize the data matrix $X$ by dividing  each row by its standard deviation before initiating Algorithm~\ref{alg:2} in order to control the variability of its cumulants. 
\begin{figure}[H]
    \centering
    \includegraphics[width=\textwidth]{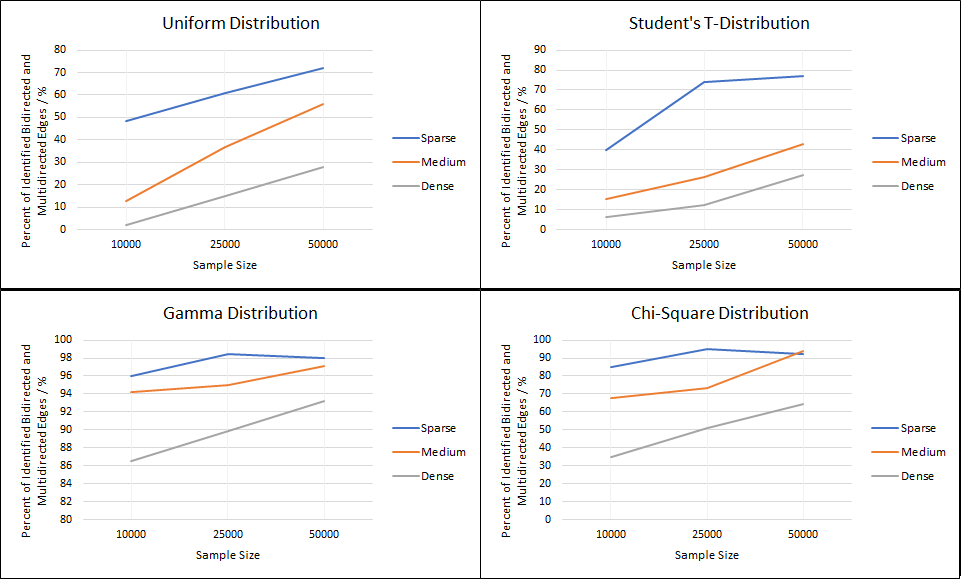}
    \caption{Random Graph Results - Correct Multidirected Edges} 
    \label{fig:test_edge_percent}
\end{figure}

\begin{figure}[]
    \centering
    \includegraphics[width=\textwidth]{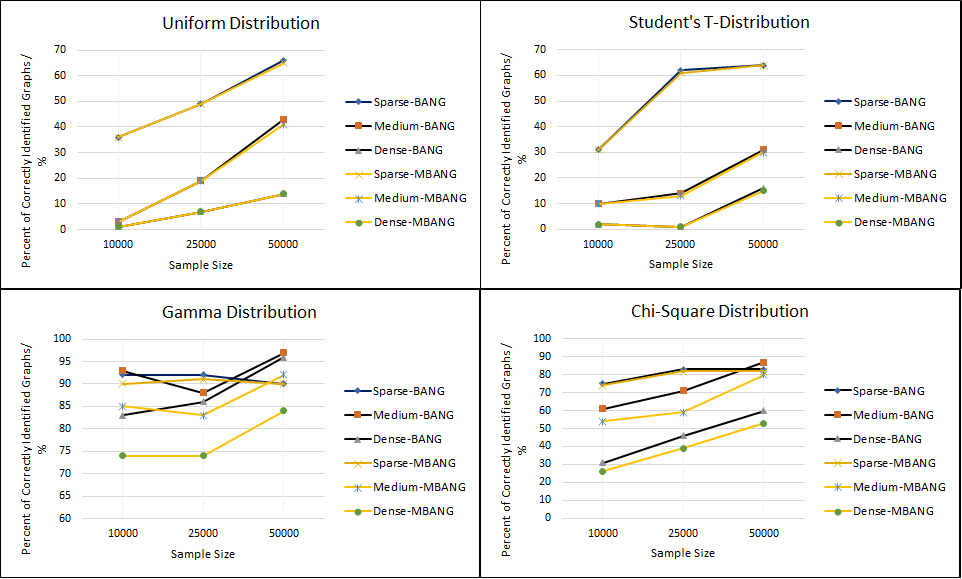}
    \caption{Random Graph Results - Correct Graphs} 
    \label{fig:test_edge_total}
\end{figure}

Among all $3600$ graphs we tested in this data set, the BANG algorithm recoverd $1883$ correctly, and the MBANG algorithm recovered $1774$ correctly. Hence, given that the BANG result was correct, the MBANG algorithm identified $94.2\%$ of the graphs correctly. Among all dense graphs, this proportion is reduced to $88.0\%$. However, this drop in accuracy might be expected because in our simulation dense graphs  contain more multi-directed edges.


\subsection{A Real Data Set}
In a paper by Grace et al. \cite{Grace}, a LSEM was used to examine the relationships between land productivity and the richness of plant diversity, the full model of which is shown in Figure~\ref{fig:Grace}. Wang and Drton~\cite{BANG} choose a subset of the variables and consider the model in Figure~\ref{fig:targetBAP} as the ground truth model. Figure \ref{fig:discovered} shows the graphical model they discover using the BANG procedure with nominal test level 0.01.

Since this is a real data set, it is possible that some of the hidden variables affect more than one bidirected edge, or there exist other hidden variables that can affect the hidden variables detected in the BANG procedure. After applying our MBANG algorithm with 0.05 tolerance, we found that all bidirected edges can be grouped in three 3-directed  edges: ("PlotSoilSuit", "PlotProd",  "SiteBiomass"),  ("PlotSoilSuit", "SiteBiomass",  "SiteProd"), and ("PlotSoilSuit", "SiteBiomass", "SiteProd"),  see Figure~\ref{fig:discoveredMBANG}.


\begin{figure}[H]
    \hspace{-0.6cm}
    \begin{subfigure}{0.4\textwidth}
    \centering
    \includegraphics[width=0.9\textwidth]{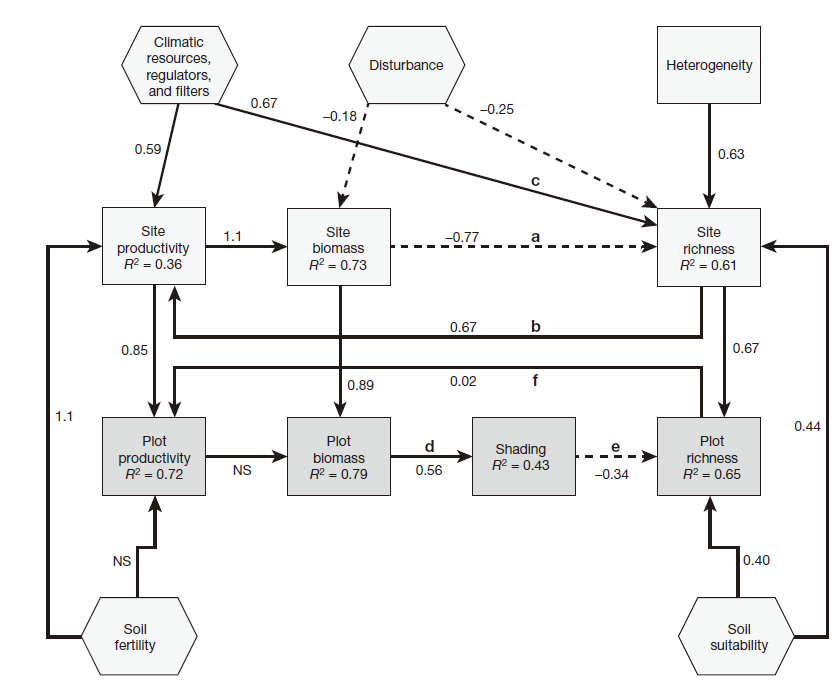}
    \caption{Full model from \cite{Grace}} 
    \label{fig:Grace}
    \end{subfigure}
    \hspace{0.1cm}
\begin{subfigure}{0.6\textwidth}
\centering
\vspace{1.6cm}
	\begin{tikzpicture}[->,shorten >=1pt,
auto,
main node/.style={rounded corners,inner sep=0pt,fill=gray!20,draw,font=\sffamily,
	minimum width = 1.5cm, minimum height = .5cm, scale=0.95}]

	\node[main node] (Pprod) [right = 0cm] {Pl Prod};
	\node[main node] (Pbio) [right = 2cm]  {Pl Bio};
	\node[main node] (Shade) [right =4cm]  {Pl Shade};
	\node[main node] (Prich) [right = 6cm]  {Pl Rich};
	\node[main node] (Sprod) [above = 1.1
	cm,right = 0cm]  {St Prod};
	\node[main node] (Sbio) [above = 1.1cm,right = 2cm]  {St Bio};
	\node[main node] (Srich) [above = 1.1cm, right = 6cm]  {St Rich};
	\node[main node] (Suit) [right = 8cm]  {Pl Suit};

    \tikzset{>=latex}

\path[color=black!20!blue,style={->}]
(Sprod) edge node {} (Pprod)
(Sprod) edge node {} (Pbio)
(Sprod) edge node {} (Prich)
(Sbio) edge node {} (Prich)
(Sbio) edge node {} (Pbio)
(Sbio) edge node {} (Pprod)
(Srich) edge node {} (Pprod)
(Srich) edge node {} (Pbio)

(Srich) edge node {} (Prich)

(Pbio) edge node {} (Shade)
(Shade) edge node {} (Prich)
(Shade) edge node {} (Prich)
(Prich) edge[bend left = 20] node {} (Pprod)
(Suit) edge node {} (Prich);

\path[color=black!20!red,style={<->}]
(Suit) edge node {} (Srich)
(Sprod) edge[bend left = 15] node {} (Srich)
(Sbio) edge node {} (Srich)
(Sprod) edge node {} (Sbio);

\end{tikzpicture}
\caption[Data example: BAP respresentation from Grace et al. (2016)]{\label{fig:targetBAP}A subset of the model from \cite{Grace} used in~\cite{BANG}}
\end{subfigure}
\caption{True model}
\end{figure}

\begin{figure}[H]
    \centering
	\begin{tikzpicture}[->,shorten >=1pt, 
	auto,
	main node/.style={rounded corners,inner sep=0pt,fill=gray!20,draw,font=\sffamily,
	minimum width = 1.5cm, minimum height = .5cm, scale=0.95}]

	\node[main node] (Pprod) [right = 0cm] {Pl Prod};
	\node[main node] (Pbio) [right = 2cm]  {Pl Bio};
	\node[main node] (Shade) [right =4cm]  {Pl Shade};
	\node[main node] (Prich) [right = 6cm]  {Pl Rich};
	\node[main node] (Sprod) [above = 1.1
	cm,right = 0cm]  {St Prod};
	\node[main node] (Sbio) [above = 1.1cm,right = 2cm]  {St Bio};
	\node[main node] (Srich) [above = 1.1cm, right = 6cm]  {St Rich};
	\node[main node] (Suit) [right = 8cm]  {Pl Suit};

    \tikzset{>=latex}
	\path[color=black!20!blue,style={->}]
	(Suit) edge node {} (Srich)
	(Pprod) edge node {} (Srich)
	(Pbio) edge node {} (Shade)
	(Sbio) edge node {} (Pbio)
	(Sbio) edge node {} (Prich)
	(Srich) edge node {} (Prich)
	(Sprod) edge node {} (Pprod)
	(Sprod) edge node {} (Prich)
	(Sprod) edge node {} (Shade)
	(Shade) edge node {} (Prich)
	(Pbio) edge node {} (Pprod)
	;

	\path[color=black!20!red, style={<->}]
	(Suit) edge[bend left = 20] node {} (Pprod)
	(Suit) edge[bend right = 5] node {} (Sbio)
	(Suit) edge node {} (Sprod)
	(Srich) edge[bend right = 20] node {} (Sprod)
	(Srich) edge node {} (Sbio)
	(Sbio) edge node {} (Pprod)
	(Sprod) edge node {} (Sbio)
	;

	\end{tikzpicture}
		\caption[Data example: model discovered by BANG]{\label{fig:discovered}Model discovered  by  BANG~\cite{BANG}.}
\end{figure}

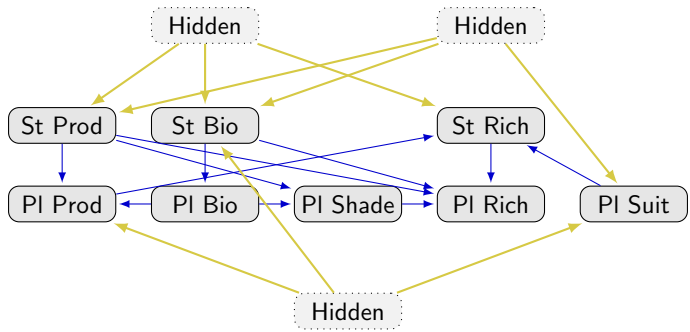
\begin{figure}[H]
    \centering
	\begin{tikzpicture}[->,shorten >=1pt, 
	auto,
	main node/.style={rounded corners,inner sep=0pt,fill=gray!20,draw,font=\sffamily,
	minimum width = 1.5cm, minimum height = .5cm, scale=0.95}]

	\node[main node] (Pprod) [right = 0cm] {Pl Prod};
	\node[main node] (Pbio) [right = 2cm]  {Pl Bio};
	\node[main node] (Shade) [right =4cm]  {Pl Shade};
	\node[main node] (Prich) [right = 6cm]  {Pl Rich};
	\node[main node] (Sprod) [above = 1.1
	cm,right = 0cm]  {St Prod};
	\node[main node] (Sbio) [above = 1.1cm,right = 2cm]  {St Bio};
	\node[main node] (Srich) [above = 1.1cm, right = 6cm]  {St Rich};
	\node[main node] (Suit) [right = 8cm]  {Pl Suit};
	\node[rounded corners,dotted,inner sep=0pt,fill=gray!10,draw,font=\sffamily,
	minimum width = 1.5cm, minimum height = .5cm, scale=0.95] (Hidden) [below=1.5cm, right =4cm] {Hidden};
		\node[rounded corners,dotted,inner sep=0pt,fill=gray!10,draw,font=\sffamily,
	minimum width = 1.5cm, minimum height = .5cm, scale=0.95] (Hidden2) [above=2.5cm, right =6cm] {Hidden};
		\node[rounded corners,dotted,inner sep=0pt,fill=gray!10,draw,font=\sffamily,
	minimum width = 1.5cm, minimum height = .5cm, scale=0.95] (Hidden3) [above=2.5cm, right =2cm] {Hidden};

    \tikzset{>=latex}
	\path[color=black!20!blue,style={->}]
	(Suit) edge node {} (Srich)
	(Pprod) edge node {} (Srich)
	(Pbio) edge node {} (Shade)
	(Sbio) edge node {} (Pbio)
	(Sbio) edge node {} (Prich)
	(Srich) edge node {} (Prich)
	(Sprod) edge node {} (Pprod)
	(Sprod) edge node {} (Prich)
	(Sprod) edge node {} (Shade)
	(Shade) edge node {} (Prich)
	(Pbio) edge node {} (Pprod)
	;

	\path[color=black!20!red, style={<->}]
	;
	
	\path[color=black!20!yellow, thick,style={->}]
	(Hidden) edge node {} (Suit)
	(Hidden) edge node {} (Pprod)
	(Hidden) edge node {} (Sbio)
	
	(Hidden2) edge node {} (Suit)
	(Hidden2) edge node {} (Sbio)
	(Hidden2) edge node {} (Sprod)
	
	(Hidden3) edge node {} (Srich)
	(Hidden3) edge node {} (Sbio)
	(Hidden3) edge node {} (Sprod)
	;

`	\end{tikzpicture}
		\caption[Data example: model discovered by MBANG]{\label{fig:discoveredMBANG}Model discovered by MBANG.}
\end{figure}

\section{Further Discussion}\label{sec:5}
In this paper we proposed a high-order cumulant based algorithm for discovering hidden variable structure in non-Gaussian LSEMs.
Note that our Algorithm~\ref{alg:2} can be used  on top of any procedure that recovers a mixed graph $G=(V, \mathcal D, \mathcal B)$ with directed and bidirected edges only and an estimate of the direct effects matrix $B$. We chose the BANG algorithm  as this first step since it applies to a large class of graphs: bow-free acyclic mixed graphs.

Second, the performance of our algorithm is closely related to the error distribution, and the density and size of the graph. For example, when the errors are unif($-5,\, 5$) and the number of vertices is 7, as the edge number increases from 8 to 20, the correct rate drops from $78\%$ to $1\%$. Also, if the density is medium, as the number of vertices increases from 5 to 7, the correct rate drops from $78\%$ to about $40\%$. Compared to  the uniform distribution, the exponential distribution has a much milder drop in correct rate. For example, when the sample size is 50000, the correct rate for exp(1) drops from $70\%$ to $61\%$ as the graph density increases from sparse to very dense (almost complete). Note, however, that this drop is also present in the output of the BANG algorithm itself.


Last,  while  we proved that Algorithm~\ref{Alg:1} finds the true graph given enough samples, it would be interesting to know the exact amount of samples needed in both BANG and Algorithm~\ref{alg:2}, as well as the nominal levels for the independence tests in  BANG and in our cumulant tests.

\section*{Acknowledgements} We thank Mathias Drton and Samuel Wang for helpful discussions regarding their algorithm~\cite{BANG}. We also thank Jean-Baptiste Seby for a helpful discussion at an early stage  of the project.

ER was supported by an NSERC Discovery Grant (DGECR-2020-00338). YL was supported by a WLIURA in the  summer of  2020.

\bibliographystyle{alpha}

\begin{thebibliography}{}

\bibitem{BronKerbosch}Bron, C. and Kerbosch, J. (1973).  Algorithm 457: finding all cliques of an undirected graph. {\em Communications of the ACM}, 16(9):575--577.

\bibitem{ComonJutten}Comon,  P. and Jutten, C. (2010) Handbook of blind Source Separation: Independent Component Analysis and Applications. {\em Academic Press, Inc.}

\bibitem{Grace} Grace, J. B., Anderson, T. M., Seabloom, E. W., Borer, E. T., Adler, P. B.,
Harpole, W. S., Hautier, Y., Hillebrand, H., Lind, E. M., Partel, M. et al. (2016). Integrative modelling reveals mechanisms linking productivity and plant species richness. \label{Grace} {\em Nature}, 529(7586):390--393. 

\bibitem{Hoyer}Hoyer, P. O., Shimizu, S., Kerminen, A. J., and Palviainen, M. (2008). Estimation of causal effects using linear non-Gaussian causal models with hidden variables. 
{\em Internat. J. Approx. Reason}, 49(2):362--378. 

\bibitem{PairwiseLiNGAM}Hyvärinen, A. and Smith, S. M. (2013).  Pairwise likelihood ratios for estimation of nonGaussian structural equation models. {\em Journal of Machine Learning Research}, 14:111--152. 

\bibitem{Lauritzen}Lauritzen, S, (1996). {\em Graphical Models}, Clarendon Press.

\bibitem{handbook}Maathuis, M., Drton, M., Lauritzen, S. and Wainwright, M., eds. (2019). {\em Handbook of graphical models.  Chapman \& Hall/CRC Handbooks of Modern Statistical Methods}. CRC Press. MR3889064.


\bibitem{Pearl} Pearl, J. (2009). {\em Causality: Models, reasoning, and inference}. Cambridge University Press, second
edition. MR 2548166.

\bibitem{multiTrek}Robeva, E. and Seby, J.-B.
Multi-Trek Separation in Linear Structural Equation Models.
Preprint: arXiv:2001.10426

\bibitem{Shimizu2006}Shimizu, S., Hoyer, P. O., Hyvärinen, A., and Kerminen, A. (2006). A linear
non-Gaussian acyclic model for causal discovery. {\em Journal of Machine Learning Research}, 7:2003–2030. 

\bibitem{Shimizu2011}Shimizu, S., Inazumi, T., Sogawa, Y., Hyvärinen, A., Kawahara, Y., Washio, T., Hoyer, P. O. and Bollen, K. (2011). DirectLiNGAM: a direct method for learning
a linear non-Gaussian structural equation model. {\em Journal of Machine Learning Research}. 12:1225–1248.

\bibitem{SGS00}
Spirtes, P., Glymour, C., and Scheines, R. (2000). {\em Causation, prediction, and Search. Adaptive Computation and Machine Learning.} MIT Press, Cambridge, MA, second edition. With additional material
by David Heckerman, Christopher Meek, Gregory F. Cooper and Thomas Richardson, A Bradford Book.
MR 1815675. 

\bibitem{Parcellingam}Tashiro, T., Shimizu, S., Hyvärinen, A. and Washio, T. (2014). 
{\em ParceLiNGAM: a
causal ordering method robust against latent confounders}.
Neural Computation. 26(1):57--83. 

\bibitem{BANG}Wang, Y.~S. and Drton, M.
 Causal Discovery with Unobserved Confounding and non-Gaussian Data.\label{BANG}
Preprint: arXiv:2007.11131

\bibitem{WangDrton}	Wang, Y.~S. and Drton, M. (2019).
	 High-dimensional causal discovery under non-gaussianity.
{\em Biometrika}, 107(1):41--59.


\end{thebibliography}

\end{document}